\relax
%File: formatting-instruction.tex
\documentclass[letterpaper]{article} %DO NOT CHANGE THIS
\usepackage{aaai18}  %Required
\usepackage{times}  %Required
\usepackage{helvet}  %Required
\usepackage{courier}  %Required
\usepackage{url}  %Required
\usepackage{graphicx}  %Required

\usepackage{hyperref}

\usepackage{amsthm}
\newtheorem{definition}{Definition}
\newtheorem{proposition}{Proposition}

\frenchspacing  %Required
\setlength{\pdfpagewidth}{8.5in}  %Required
\setlength{\pdfpageheight}{11in}  %Required
%PDF Info Is Required:
  \pdfinfo{
/Title (2018 Formatting Instructions for Authors Using LaTeX)
/Author (AAAI Press Staff)}
\setcounter{secnumdepth}{0}  
 \begin{document}
% The file aaai.sty is the style file for AAAI Press 
% proceedings, working notes, and technical reports.
%
\title{Smooth Inter-layer Propagation of Stabilized Neural Networks for Classification}
\author{Jingfeng Zhang \\
	School of Computing \\ 
	National University of Singapore \\
	j-zhang@comp.nus.edu.sg \\
	\And Laura Wynter\\
	IBM Research \\
	Singapore, 018983\\
	lwynter@sg.ibm.com
}
\maketitle
\begin{abstract}
Recent work has studied the reasons for the remarkable performance of deep neural networks in image classification. We examine    batch normalization on the one hand and the dynamical systems view of residual networks on the other hand. Our goal is in understanding the notions of stability and smoothness of the inter-layer propagation of ResNets so as to explain when they contribute to significantly  enhanced performance.  We postulate that such stability is of importance for the trained ResNet to transfer. 
\end{abstract}

\section{Introduction}
A significant improvement in the performance of neural networks was achieved with the introduction of Residual Networks (ResNets) by  \cite{he2016deep}. ResNets  were  among the first effective ultra deep convolutional networks. The authors observed that the depth of a neural network is of crucial importance to its  performance, but that empirical results consistently exhibited performance loss with increasing depths even via an identity mapping. The identity mapping in theory should have no detrimental effect on the network performance, but experience showed that it caused substantial degradation. This was thought not to simply be a result of vanishing or exploding gradients, as the authors employed batch normalization \cite{ioffe2015batch} to improve the stochastic gradient descent (SGD) convergence and yet witnessed the same performance loss.

The solution proposed by the authors to solve the issue of degradation with increasing depth was to define a new mapping, ${\cal F}(x)$,  as the residual of the original mapping ${\cal F}(x):={\cal H}(x)-x$,  leading to a re-defined mapping  ${\cal F}(x)+x$. This  leads  to  the implementation of skip connections in the network, in which a group of layers, called a block, is skipped by performing an identity mapping. Since its introduction and due to the remarkable results achieved with ResNets, a number of variants have been introduced, from ResNeXt, \cite{resnext} which splits the blocks into branches and adds a new tune-able cardinality parameter to determine the number of branches, to Densely Connected Nets \cite{Huang2017DenselyCC}, which connect feature maps of each block to all subsequent blocks.
We shall however take the original ResNet as our starting point.

Shortly before the introduction of ResNets, batch normalization was introduced by  \cite{ioffe2015batch}.  Batch normalization was employed by \cite{he2016deep} in their definition of ResNets and indeed  many if not most CNN implementations today make use of batch normalization. The goal of batch normalization according to the authors was to accelerate the model training process by allowing for effective training with higher learning rates, as well as reducing the dependence on  parameter initialization. They accomplish this by addressing and correcting the so-called internal covariate
shift so as to control and normalize the input distributions of each mini-batch in the network.

\cite{2018arXiv180511604S} postulate that the well-documented improvement of batch normalization on the accuracy and training rate of neural networks are due less, or, in fact, not at all to its ability to control the first two moments of the mini-batch input distributions than to another phenomenon. The authors claim  that batch norm does not reduce internal covariate shift but rather
increases the smoothness of the optimization problem, and specifically  reduces  the bound on the (local) Lipschitz constant of of the loss and the (local) $\beta$-smoothness constant of the gradients. The authors demonstrate this new explanation of batch norm's success convincingly, both theoretically and empirically.

Empirically, the authors show  that adding noise with non-zero-mean and non-unit-variance  to the activations in each batch, after each batch normalization procedure, thus increasing the covariate shift from its original level, still results in marked improvement in training accuracy. On the other hand, the authors show experimentally that the objective function and the gradients are considerably more smooth with batch norm than without.

In parallel, recent work has examined neural networks, and specifically ResNets, as discrete (homogeneous, time invariant) dynamical systems \cite{Liao2016BridgingTG}.  \cite{E2017} described the ResNet update formula as a step in a discrete forward Euler method for solving an ordinary differential equation. In an impressive series of papers, \cite{Haber17}, \cite{haber17b}, \cite{haber18}, \cite{haber18b}, \cite{haber18c} have taken this interpretation of ResNet to the next level by  generalizing the architecture using advances in numerical methods for the resolution of partial differential equations, from update formulae for parabolic and hyperbolic PDE, such as using second-order dynamics,  to multi-scale methods. 

A crucial purpose of these studies is to propose architectures that are stable in the sense of a dynamical system. The analogy is that the ResNet forward propagation should take the initial data from a state in which they are not linearly classifiable, to one in which they are, with a smooth trajectory. The authors claim that this smoothness is important, as it is in dynamical systems, to ensuring the insensitivity of the system to perturbations in the input data, be it from noise or adversarial attacks. Specifically, the authors propose that the forward propagation is well-posed and thus stable when the (real part of the) eigenvalues of the Jacobian of the nonlinear mapping in the ResNet update are non-positive.

We are interested in examining these different notions of stability and smoothness of the inter-layer propagation of ResNets so as to understand when they contribute to overall improved performance.  
The contributions of this paper are: (i) a detailed demonstration of the inter-layer propagation smoothness of these methods, and (ii) the introduction of a hybrid approach allowing for enhanced accuracy.

\section{Stability of ResNets}

Batch normalization and the interpretation of ResNets as a dynamical system and a step of the forward Euler method have made valuable contributions  towards   improving the stability of the original ResNet architecture.

\subsection{Stable Feature Transformation}

Consider first the dynamical system view of  ResNet. The aim of the neural network is to learn and simulate the process of underlying nonlinear feature transformations, so that the transformed features can be matched with target values, such as  categorical labels for classification and continuous quantities for regression. Specifically, the goal of the deep neural network is to take the initial data to a final state which is linearly separable.

As described in the references above, these underlying feature transformations can be characterized as equations of motions for the features themselves. In particular, we may view these transformations as a system of first order differential equations.

Such a system is formulated as 
\begin{equation}
	\dot{{y}} = f(t, {y} )  
\end{equation}
in which $f: I \times D  \rightarrow R^d$, $I \in R, D \in R^d$ and $t \geq 0 $.
${y}$ is the feature vector. 

The transition of an instance is thus a first order equation, together with its initial data, giving rise to the initial value problem (IVP):
\begin{equation}
	\dot{{y}} = f(t, {y} ) ,\  {y}(0) = {y}_{0} 
\end{equation}
where ${y}_{0} $ is the initial input vector of the instance.

It has been shown in \cite{E2017}, \cite{Haber17} that its forward propagation is an approximation of the solution trajectory of the discrete initial value problem. 

For a specific problem, we assume the underlying feature transformations are stable. 
In other words, we assume our partial differential equations (PDE), which characterize the the underlying feature transformations, are stable. 
Thus when we solve the initial value problem,  small changes in the initial data ${y}_{0}$ will result in  small changes in the transformed features at every transition step $t$. 

\subsubsection{IVP Stability}

Consider the perturbed IVP, 
\begin{equation}
	\dot{{y}} = f(t, {y}) ,\ {y}_{\epsilon}(0) = {y}_{0} +\epsilon, 
\end{equation} 
we have 
\begin{equation}
	\max_{t_0 \leq t} |{y}_{\epsilon}(t) - {y}(t) | \leq c\epsilon
\end{equation} for some constant $c \geq 0$.

In order to solve the stable IVP, we would like our numerical methods to be stable as well. 

One of those numerical methods is the explicit Euler method, which approximates a PDE by 
\begin{equation}
	{y}_{i+1} = {y}_{i} + h {f}(t_i, {y}_i)
	\label{eq:h}
\end{equation} 
where ${y}_{i}$ is the approximation of ${y}(t_i)$.

\begin{definition}[ResNet forward propagation step]
	Let  $h = 1$ and ${f}(t_i, {y}_i) ={K_2}_i \sigma ({K_1}_i {y}_i + {b_1}_i) + {b_2}_i$. 
\end{definition}
Then, the Euler method of \ref{eq:h} can be seen as the ResNet forward propagation step. We can thus state a necessary condition for its stability.

\begin{proposition}[Stability]
	The explicit Euler method,  given ${y}_0$, is stable if $h$ is sufficiently small, and satisfies 
	\begin{equation}
		\max |1 + h \lambda ({J}_i)| \leq 1
		\label{eq:eig}
	\end{equation}
	where $\lambda ({J}_i)$ is the set of eigenvalues of the Jacobian matrix  of ${f}(t_i, {y}_i)$ with respect to the features ${y}_i$, at the $i$th approximation step, and the maximum is taken over the real part  of the eigenvalues in the set.
\end{proposition}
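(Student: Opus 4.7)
The plan is to reduce the nonlinear stability question to a linear one by linearizing $f$ about the reference trajectory and then examining how small perturbations propagate under one Euler step. Let $\delta_i := y_i^{\epsilon} - y_i$ denote the deviation between the perturbed and unperturbed iterates at step $i$, with $\delta_0 = \epsilon$. A first-order Taylor expansion gives $f(t_i, y_i + \delta_i) = f(t_i, y_i) + J_i \delta_i + O(\|\delta_i\|^2)$, where $J_i$ is the Jacobian of $f(t_i, \cdot)$ at $y_i$. Substituting into the explicit Euler update (\ref{eq:h}) and subtracting the unperturbed update produces the variational recursion
\begin{equation}
\delta_{i+1} = (I + h J_i)\,\delta_i + O(\|\delta_i\|^2).
\end{equation}

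Iterating this recursion, the leading-order behaviour of $\delta_n$ is governed by the product $\prod_{i=0}^{n-1}(I + h J_i)\,\delta_0$, so stability in the sense of the IVP definition above reduces to showing that each amplification matrix $I + h J_i$ is non-expansive. Next I would diagonalize $J_i$ (or use a Jordan basis in the non-diagonalizable case) and observe that along the $k$-th eigendirection the step maps the component by the scalar $1 + h\lambda_k(J_i)$. Consequently the largest per-step amplification is governed by $\max_k |1 + h\lambda_k(J_i)|$, and requiring this quantity to be at most $1$ forces $\|\delta_{i+1}\| \leq \|\delta_i\|$ to leading order, preventing exponential blow-up across layers.

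Geometrically, $1 + h\lambda$ lies in the closed unit disk precisely when $\lambda$ sits inside a disk of radius $1/h$ centered at $-1/h$ in the complex plane. This captures both clauses of the proposition simultaneously: the real part of every eigenvalue must be non-positive (matching the well-posedness criterion the paper attributes to Haber and co-authors), and $h$ must be chosen small enough that even eigenvalues with large modulus still fall inside this shifted disk, yielding the "sufficiently small $h$" hypothesis.

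The main obstacle I foresee is handling the $O(\|\delta_i\|^2)$ remainder carefully so that the linearized conclusion survives over $n$ accumulated steps; a clean treatment would either invoke a Gronwall-type estimate on the variational equation (assuming $J_i$ is uniformly bounded along the trajectory) or restrict to a neighborhood of the reference solution where the higher-order terms are dominated by the linear part. A secondary technical point is that the spectral condition controls only the spectral radius, not the operator norm; one must either appeal to non-normality bounds or assume $J_i$ admits a well-conditioned eigenbasis to pass from the eigenvalue inequality (\ref{eq:eig}) to a genuine bound on $\|I + hJ_i\|$.
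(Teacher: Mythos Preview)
Your proof sketch is correct and in fact goes well beyond what the paper does: the paper's own ``proof'' consists entirely of a citation to a numerical analysis textbook (Ascher, \emph{Numerical Methods for Evolutionary Differential Equations}), with no argument given in the text. The linearization--variational recursion--amplification factor argument you outline is precisely the standard route taken in such references, so in that sense your approach and the reference the paper defers to are the same.

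The technical caveats you flag (controlling the quadratic remainder over many steps via a Gronwall-type bound, and the gap between spectral radius and operator norm for non-normal $J_i$) are genuine and are typically handled in the cited textbook treatments either by restricting to the scalar test equation $\dot{y}=\lambda y$ to derive the stability region, or by assuming a smoothly varying, well-conditioned eigenbasis. You identify these correctly; they are not gaps in your argument so much as the usual fine print that accompanies this classical result.
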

\begin{proof}
	See, e.g.  \cite{ascher2008numerical} for detailed proof. 
\end{proof}

ResNet is not guaranteed to have smooth transitions between layers, which will lead to volatile feature transitions. The impact of unstable transitions has been postulated by \cite{Haber17}, \cite{haber18} as leading to potential loss of accuracy when input data is noisy or corrupted by adversaries.
We are interested in the use of ResNets beyond image classification.  The stability can be of potentially significant importance for the ability of the trained ResNet to transfer to other data sets as well as to  time-evolving data.

To illustrate the stability of the  convergence of the original formulation of  ResNet, consider the synthetic   data generated to represent a nonlinearly-separable binary classification problem in Figure \ref{fig:moon0}. 

Animations of feature propagation with scaled and fixed axes corresponding to Fig. \ref{fig:moonResNet} - Fig. \ref{fig:moon_data_res_batch_norm-0.1} can be found in the 
\href{https://www.dropbox.com/sh/majyu9d69r1y1ya/AAD-fJ-KmbK9-eHQRn_7d8PEa?dl=0}{supplement materials}.

\begin{figure}[h]
	\centering
	\includegraphics[scale = 0.25]{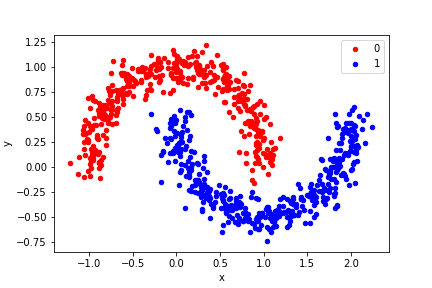} 
	\includegraphics[scale = 0.25]{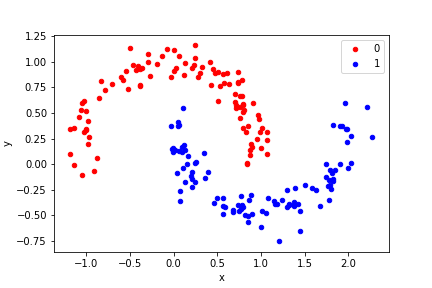}
	\caption{Synthetic nonlinearly-separable binary data: training data (left) and test data (right). }
	\label{fig:moon0}
\end{figure}

Consider now the plain vanilla ResNet (without batch normalization) on the 2-dimensional synthetic binary dataset in Fig. \ref{fig:moonResNet}, where the figures shown are selected from the 100 layers of the network: the 0th layer, and layers 5,10,20,\ldots, 100. Note that the axes ranges increase and shift over the 12 images; this is necessary for visualization as the activations diverge to the very large values on the deeper layers. 

It is instructive via this  2-dimensional classification problem to visually examine the convergence progression of the ResNet;   observe in Fig. \ref{fig:moonResNet}  that the initial, nonlinearly-separable binary features do become linearly aligned, which is indeed one of the goals of the network.   However, they do not become  well-separated by their binary values;   the regions of red and blue remain mixed over the line to which they tend, and the solutions  oscillate reaching an accuracy of 0.8. Animations of the convergence over the full 100 layers are available in the supplemental materials.

\begin{figure}[h]
	
	\includegraphics[scale = 0.18]{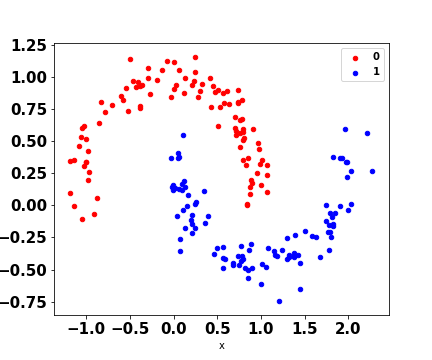}
	\includegraphics[scale = 0.18]{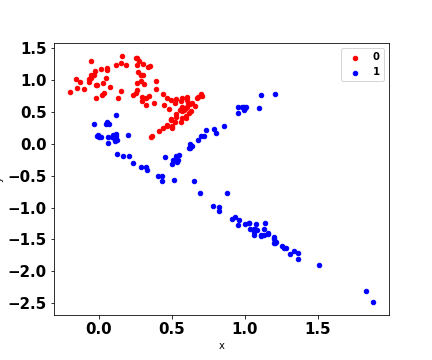}
	\includegraphics[scale = 0.18]{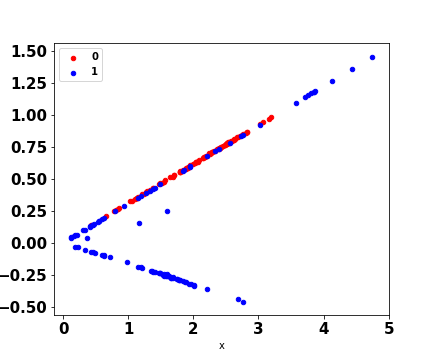}\\
	\includegraphics[scale = 0.18]{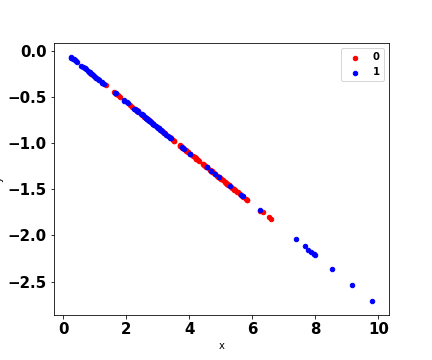} 
	\includegraphics[scale = 0.18]{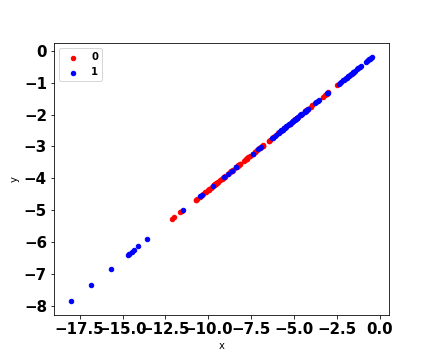}
	\includegraphics[scale = 0.18]{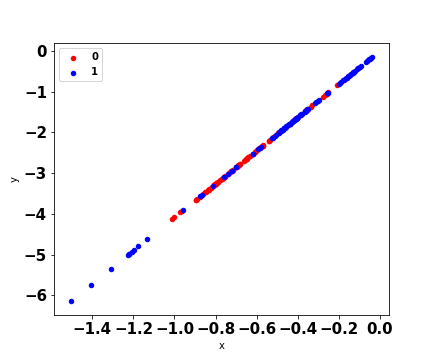}\\
	\includegraphics[scale = 0.18]{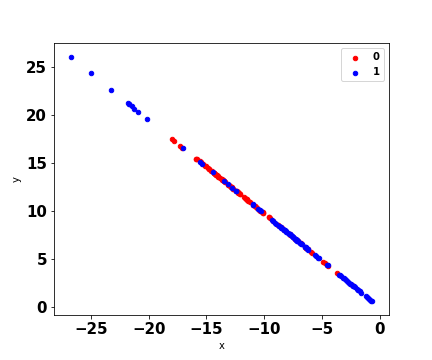}
	\includegraphics[scale = 0.18]{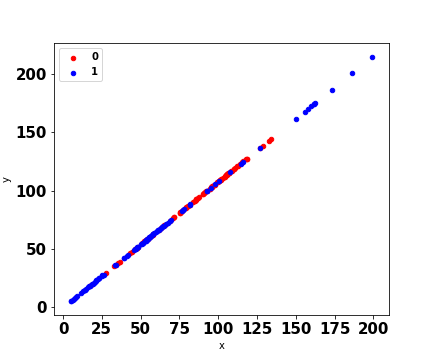}
	\includegraphics[scale = 0.18]{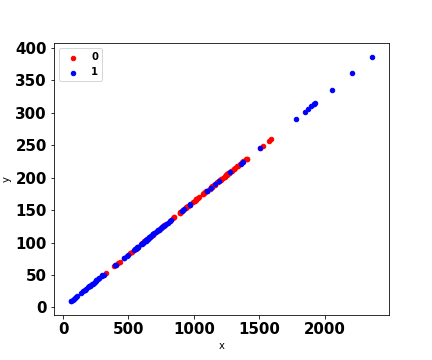}\\
	\includegraphics[scale = 0.18]{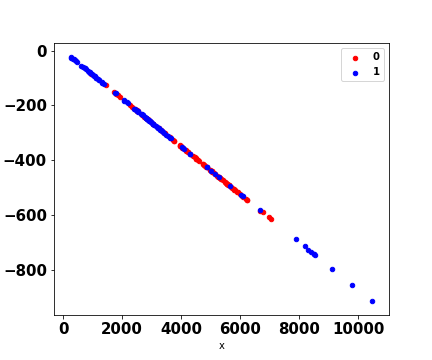}
	\includegraphics[scale = 0.18]{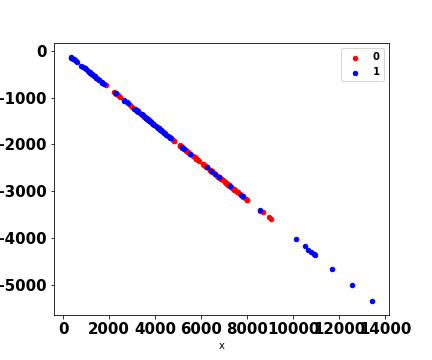}
	\includegraphics[scale = 0.18]{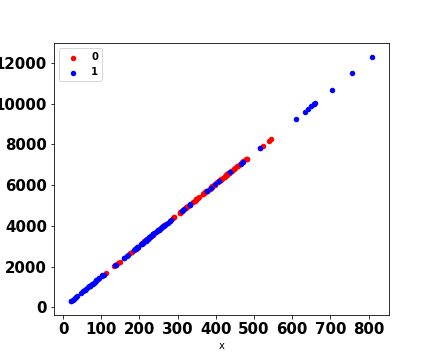}
	\caption{Plain vanilla ResNet (no batch normalization) on the synthetic 2-dim. binary data of Fig. \ref{fig:moon0}. Layers shown are 0,5,10,20,30\ldots 100. Axes ranges increase and shift  to visualize  diverging features with increasing depth.}
	\label{fig:moonResNet}
	
\end{figure}

\subsubsection{Stabilizing the ResNet using properties of the IVP}
Based on properties of the initial value problem, we consider two mechanisms to stabilize the transitions between layers and induce smoother algorithm convergence. 

\paragraph{Explicit step size parameter} As noted by \cite{haber18b}, the parameter $h$ in (\ref{eq:h}) can be viewed as an explicit step size in a numerical optimization iteration. The authors decrease  the explicit step size parameter in their multi-level approach  by half at each level  and double the number of residual blocks, thereby leaving unchanged the original differential equation. In our case, we treat the step size $h$ as an explicit parameter to be tuned so as to avoid divergence of the trajectory. 
This leads to the following update:

\begin{definition}[ResNet with explicit step size parameter] Let
	\begin{equation}
		{y}_{i+1} = {y}_{i} + h ( {K_2}_i \sigma ({K_1}_i {y}_i + {b_1}_i) + {b_2}_i ) \label{res_transition_block}
	\end{equation}
	where $h\in (0,1]$. 
\end{definition}
Note that setting $h=1$ we retrieve the plain vanilla ResNet.

\paragraph{Shrinkage blocks} Secondly, following \cite{haber18}, and in accordance with the theory of stable iterative approximation methods for PDEs, we 
add a specific constraint on the transition matrices, which has the purpose of forcing  the shrinkage of the (real parts of the) eigenvalues $\lambda ({J}_i)$, as given by the sufficient condition  for smooth convergence and stability (\ref{eq:eig}).
This leads to the following update.

\begin{definition}[Shrinkage block] The ResNet shrinkage block update is given by:
	\begin{equation}
		{y}_{i+1} = {y}_{i} - h ( {{K_1}_i } ^T \sigma ({K_1}_i {y}_i + {b_1}_i) + {b_2}_i ).  \label{stable_transition_block}
	\end{equation}
\end{definition}

\begin{proposition}
	Given the shrinkage block update of  (\ref{stable_transition_block}) , for all $h\in(0,1]$,  the eigenvalues of ${J}_i$ are real and bounded by zero.
\end{proposition}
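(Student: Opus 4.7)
My plan is to show that the Jacobian $J_i$ of the right-hand-side $f(t_i,y_i) = -K_{1_i}^T\sigma(K_{1_i}y_i + b_{1_i}) - b_{2_i}$ of the shrinkage update is a symmetric, negative semidefinite matrix, from which both conclusions (real eigenvalues, bounded above by zero) will follow from elementary linear algebra. Observe first that $h$ does not enter the definition of $J_i$ at all (only the expression $y_{i+1}=y_i+hf$), so the claim will automatically hold uniformly in $h\in(0,1]$ once it is established at the level of $f$.

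The first concrete step is to apply the chain rule to $f$. Writing $D_i = \mathrm{diag}\bigl(\sigma'(K_{1_i}y_i+b_{1_i})\bigr)$, I obtain
\begin{equation}
J_i \;=\; \frac{\partial f}{\partial y_i} \;=\; -\,K_{1_i}^{T}\, D_i\, K_{1_i}.
\end{equation}
The key structural feature is the transposed pairing $K_{1_i}^T(\cdot)K_{1_i}$, which is exactly the point of tying the outer weight matrix to the inner one in the shrinkage block.

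The second step is to record two properties of $D_i$. Because $\sigma$ is a coordinate-wise, monotone nondecreasing activation (e.g.\ ReLU, leaky ReLU, sigmoid, tanh — all of which have $\sigma'\ge 0$), the diagonal entries of $D_i$ are nonnegative; hence $D_i$ is positive semidefinite. Moreover $D_i$ is diagonal and therefore symmetric, so $D_i^{T}=D_i$. From this I immediately get symmetry of $J_i$:
\begin{equation}
J_i^{T} \;=\; -\,K_{1_i}^{T} D_i^{T} K_{1_i} \;=\; -\,K_{1_i}^{T} D_i K_{1_i} \;=\; J_i,
\end{equation}
which gives real eigenvalues by the spectral theorem.

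The third step is to bound them above by zero. For any vector $v$, setting $w = K_{1_i} v$ gives $v^{T} J_i v = -\,w^{T} D_i w \le 0$, since $D_i\succeq 0$. Thus $J_i$ is negative semidefinite and every eigenvalue satisfies $\lambda(J_i)\le 0$, completing the proof. I do not anticipate a real obstacle: the only mild subtlety is justifying $\sigma'\ge 0$ and handling nondifferentiable points of ReLU, which can be dispatched by working with a subgradient selection or by noting that the argument goes through on the full-measure set where $\sigma'$ exists; the algebraic identity $J_i = -K_{1_i}^{T} D_i K_{1_i}$ does the real work.
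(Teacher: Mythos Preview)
Your argument is correct and is essentially the same as the paper's: both compute $J_i = -K_{1_i}^{T}\,\mathrm{diag}(\sigma')\,K_{1_i}$, invoke $\sigma'\ge 0$ from monotonicity of $\sigma$, and conclude that $J_i$ is symmetric negative semidefinite, hence has real nonpositive eigenvalues. Your write-up simply makes explicit the symmetry/spectral-theorem step and the quadratic-form bound that the paper leaves implicit.
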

\begin{proof}
	In the case of ResNet, we have that ${J}_i = -  ( {K_1}_{i} ^T \sigma ' ({K_1}_i {y}_i + {b_1}_i) {K_1}_i )$.
	As the activation function $\sigma$ is   non-decreasing, we have that $\sigma ' \geq 0 $. Thus the eigenvalues  $\lambda_{{J}_i}$ are real and bounded from above by zero for all $h\in (0,1]$. 
\end{proof}

Consider  the same synthetic 2-dim. binary classification data of Fig. \ref{fig:moon0}. As noted by \cite{haber18b}, reducing the step size parameter to a (fixed) $h=0.1$ would require an increase in the number of layers by 10 times to obtain the same differential equation. In practice, such dramatic increase in depth is not needed, and in many cases little to no increase in depth is required to achieve improved stability and accuracy. 
We illustrate the remarkable benefit of a small step size in Fig. \ref{fig:moon_data_res_h=0.1} on an extended network with a depth of 500 layers. We show the output at layers 0 and 5 as before and subsequently at layers 50,100, 150, 200,\ldots 500.
As expected, the algorithm progresses more slowly to a linear representation of the features, but it is also remarkably smoother and better.  With a 500-layer network the separation and linearization is perfect. Towards the deeper layers, while linearization was not complete, accuracy reaches 1.0 before layer 200.

\begin{figure}[h]
	
	\includegraphics[scale = 0.18]{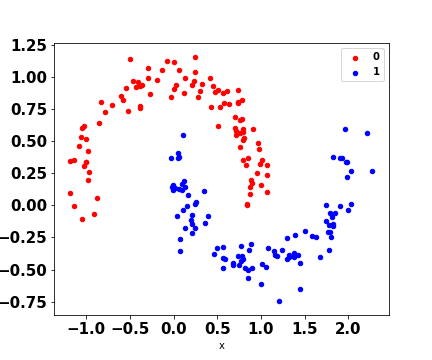}
	\includegraphics[scale = 0.18]{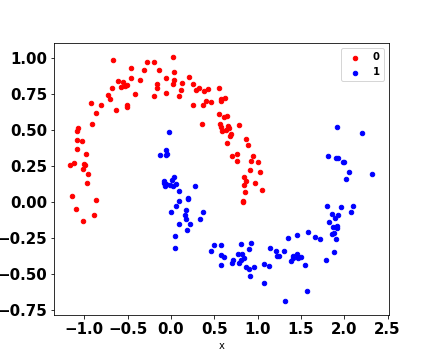}
	\includegraphics[scale = 0.18]{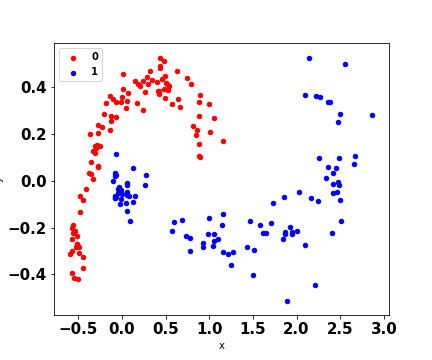} \\
	\includegraphics[scale = 0.18]{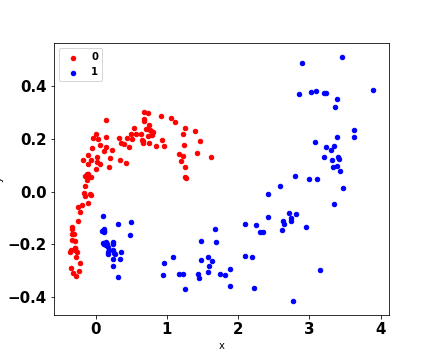} 
	\includegraphics[scale = 0.18]{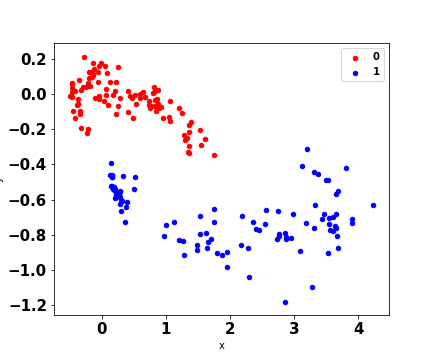}
	\includegraphics[scale = 0.18]{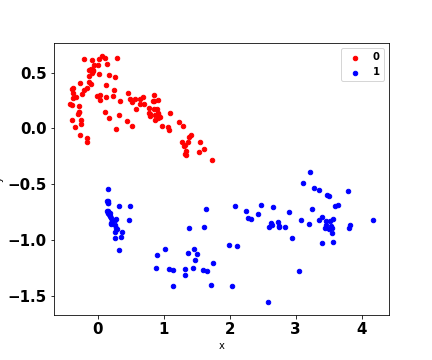} \\
	\includegraphics[scale = 0.18]{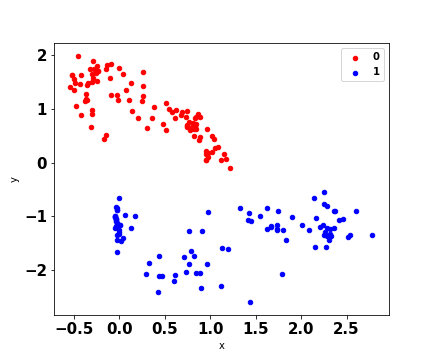}
	\includegraphics[scale = 0.18]{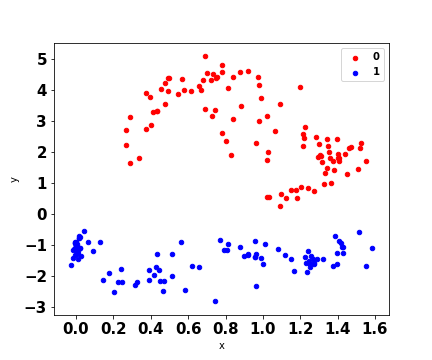} 
	\includegraphics[scale = 0.18]{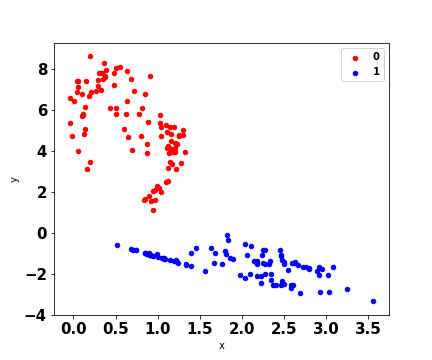} \\
	\includegraphics[scale = 0.18]{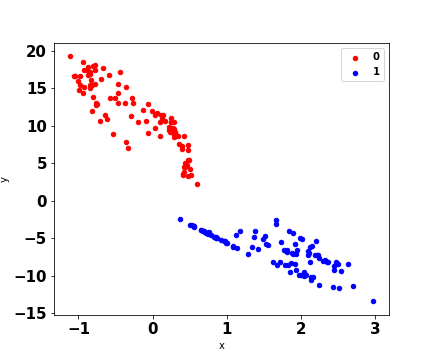}
	\includegraphics[scale = 0.18]{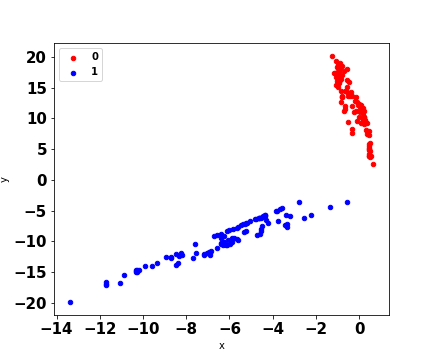}
	\includegraphics[scale = 0.18]{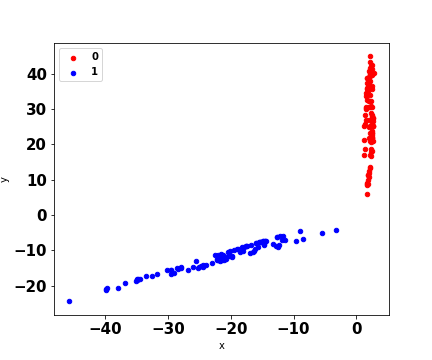}
	\caption{Plain ResNet with explicit step size of  $h = 0.1$ on the synthetic 2-dim. data of Fig. \ref{fig:moon0} and a deeper (500-layer) network. Layers shown are 0,5,50,100,150 200,\ldots 500.
		Axes ranges shift  to better visualize    features with increasing depth; note that the divergence of Fig. \ref{fig:moonResNet} no longer occurs.}
	\label{fig:moon_data_res_h=0.1}
	
\end{figure}

Consider now the use of the shrinkage block on the ResNet, shown in Fig. \ref{fig:moon_data_shrinking_h=1.0}. Here, we keep the step size parameter $h=1$ as in the plain vanilla ResNet.
It is clear, given the properties of the update step, that although the stable shrinkage  block update  guarantees  every transition is stable,  when $h= 1 $ the method  shrinks the features too fast, observable from the scale of the axes in Fig. \ref{fig:moon_data_shrinking_h=1.0}.
Contrary to offering the linear separability of our features that we seek, the result in this case is that
the transited features become linear but less distinguishable at the deeper layers. Increased depth does not improve accuracy, which does not reach 0.7.

\begin{figure}[h]
	
	\includegraphics[scale = 0.18]{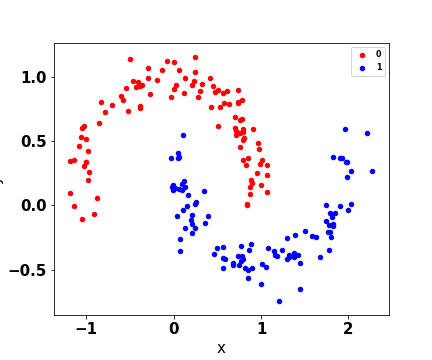}
	\includegraphics[scale = 0.18]{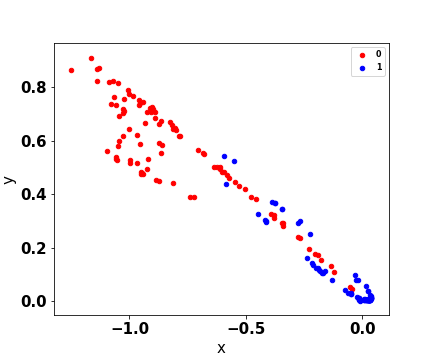}
	\includegraphics[scale = 0.18]{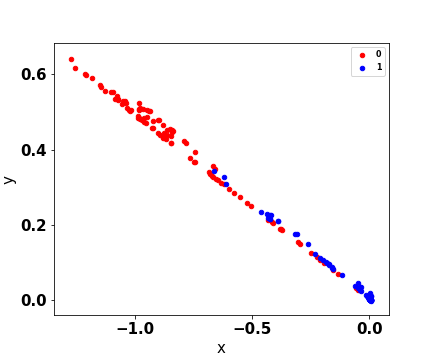}\\
	\includegraphics[scale = 0.18]{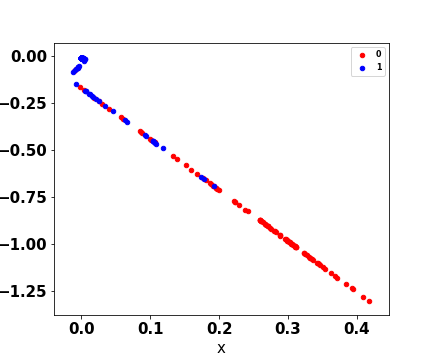} 
	\includegraphics[scale = 0.18]{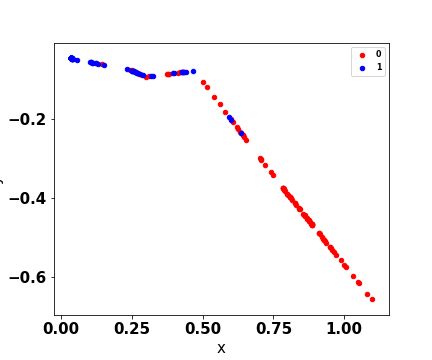}
	\includegraphics[scale = 0.18]{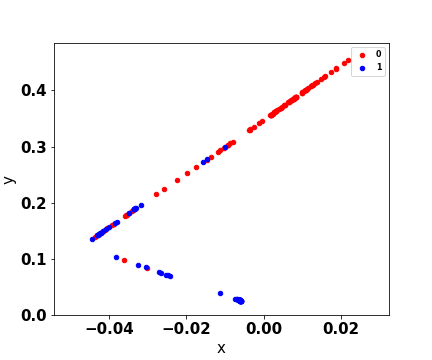}\\
	\includegraphics[scale = 0.18]{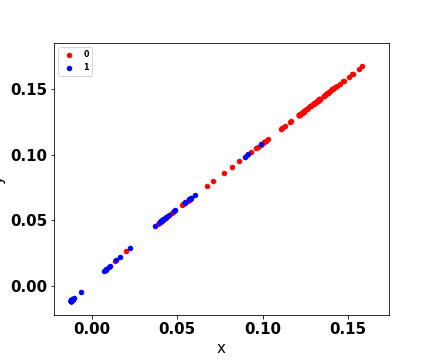}
	\includegraphics[scale = 0.18]{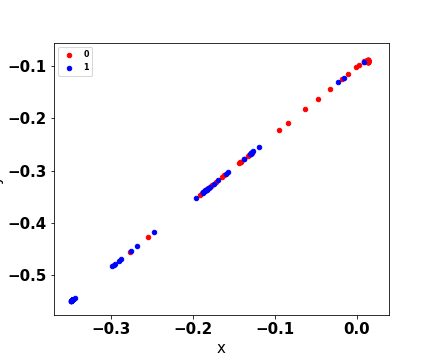}
	\includegraphics[scale = 0.18]{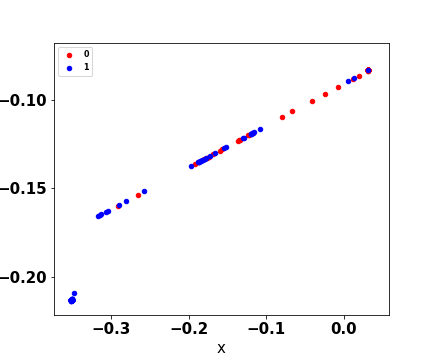}\\
	\includegraphics[scale = 0.18]{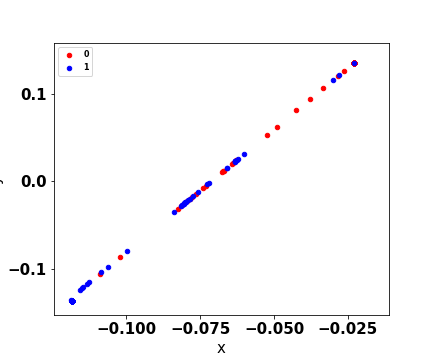}
	\includegraphics[scale = 0.18]{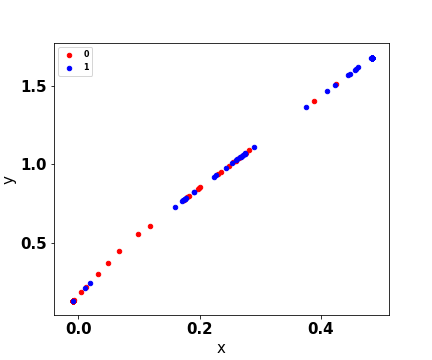}
	\includegraphics[scale = 0.18]{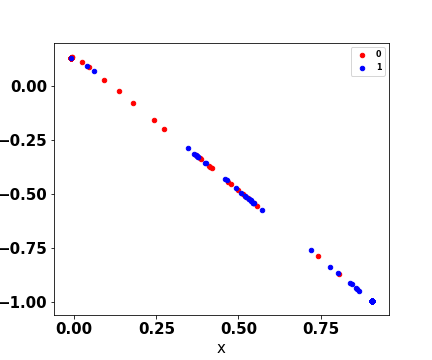}
	\caption{ResNet with shrinkage blocks on the synthetic 2-dim. binary data of Fig. \ref{fig:moon0}. Layers shown are 0,5,10,20,30\ldots 100. Axes  shift  to  visualize    features with increasing depth; note that  axes shrink rather than diverge.}
	\label{fig:moon_data_shrinking_h=1.0}
	
\end{figure}

Consider next the shrinkage block in which we reduce the step size to $h=0.1$,  in Fig. \ref{fig:moon_data_shrinking_h_0.1}.  The algorithm progression is as expected slower and smoother, and the resulting separation of  features is better though not perfect. Increased depth does not offer an advantage; accuracy on a 100-layer and 500-layer network are largely the same, at 0.85.

\begin{figure}[h]
	
	\includegraphics[scale = 0.18]{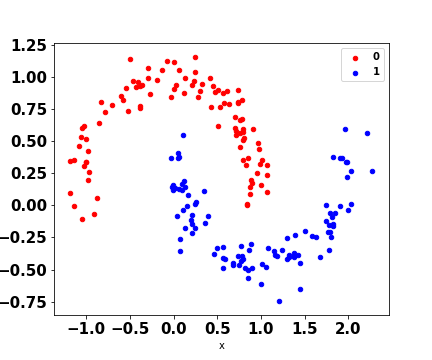}
	\includegraphics[scale = 0.18]{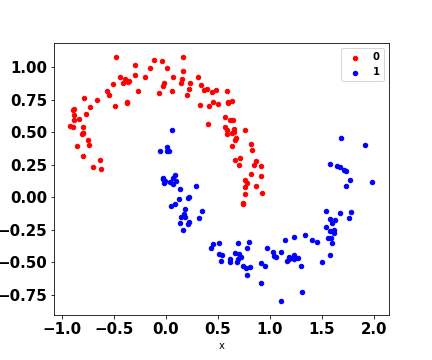}
	\includegraphics[scale = 0.18]{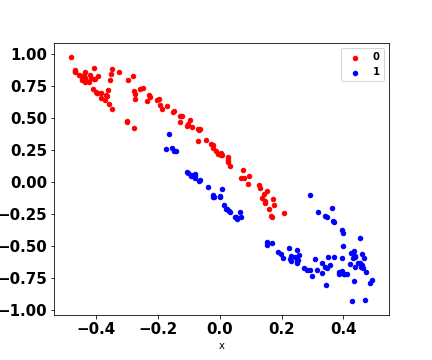} \\
	\includegraphics[scale = 0.18]{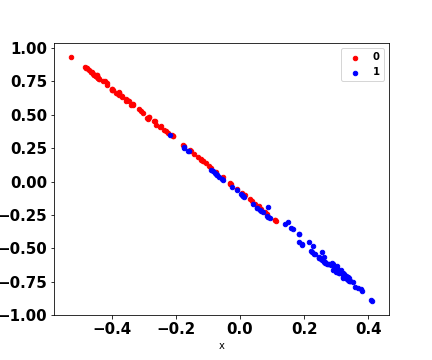}  
	\includegraphics[scale = 0.18]{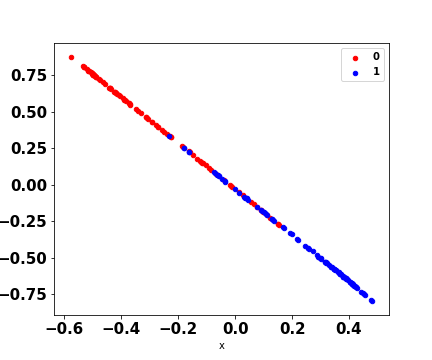}
	\includegraphics[scale = 0.18]{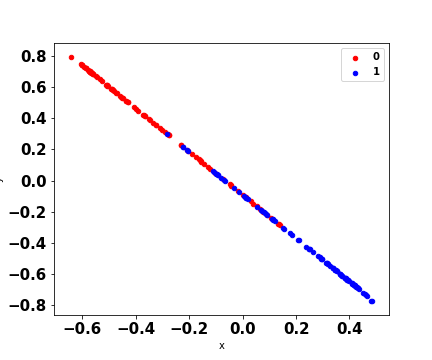} \\
	\includegraphics[scale = 0.18]{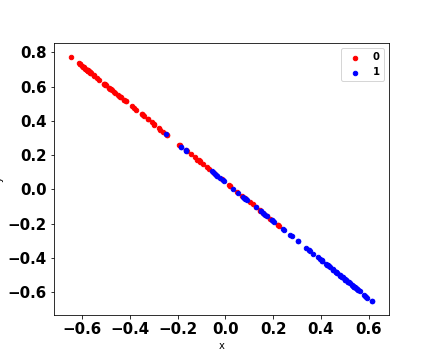}
	\includegraphics[scale = 0.18]{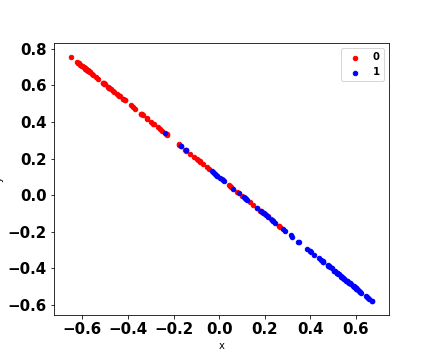} 
	\includegraphics[scale = 0.18]{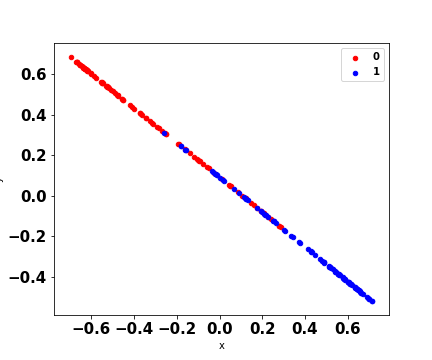} \\
	\includegraphics[scale = 0.18]{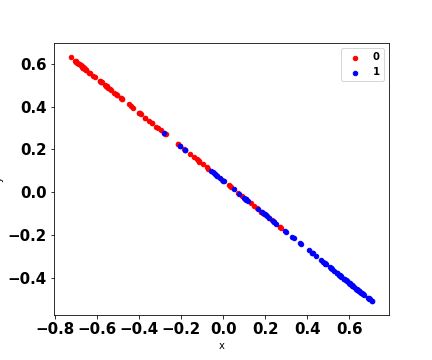}
	\includegraphics[scale = 0.18]{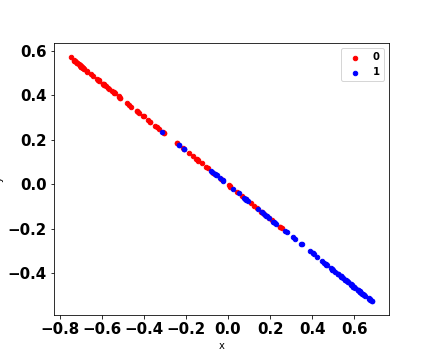}
	\includegraphics[scale = 0.18]{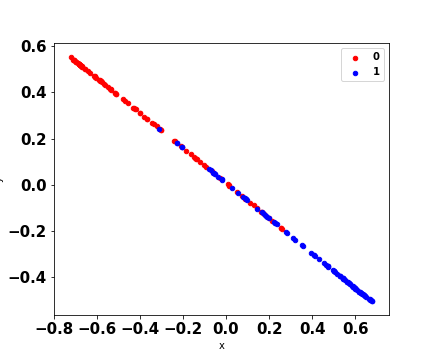}
	\caption{ResNet with shrinkage blocks and step size  $h=0.1$ on the synthetic 2-dim. binary data of Fig. \ref{fig:moon0} with a deeper (500 layer) network. Layers shown are 0,5,50,100,150 200,\ldots 500. Axes shift  to  visualize    features with increasing depth; note that axes ranges shrink.}
	\label{fig:moon_data_shrinking_h_0.1}
	
\end{figure}

%%%%%%%%%%%%%%%%%
\subsection{Batch normalization} 
Batch normalization \cite{ioffe2015batch} 
is a technique designed to  provide each mini-batch   in a neural network with inputs normalized to   zero mean - unit variance, and subsequently scaled for each mini-batch by an optimized linear scaling as a function of two parameters, a slope and an intercept. Batch normalization has been accepted as a gold standard in  most CNN implementations today.

\cite{2018arXiv180511604S} demonstrated theoretically and empirically that while batch normalization  offers greatly improved convergence and accuracy, it does not do so by correcting the internal covariate shift. They showed theoretically that the batch norm update decreases the (local) Lipschitz constant of the objective function and the (local) $\beta$-smoothness coefficient of its gradient.
Empirically, the authors examined  VGG networks and showed that batch norm does not in fact reduce internal covariate shift, and that when the internal covariate shift is increased (by adding noise from a distribution having non-zero-mean and non-unit-variance) the benefits of batch normalization still accrue. The authors illustrate the smoothness offered by batch norm on the objective function  and on the gradients.

We illustrate ResNet with batch norm on  the 2-dim. synthetic binary data in Fig. \ref{fig:moon_data_res_batch_norm}. The smoothness offered by batch norm is not apparent from the pattern as was the case with a reduced step size $h$ and the shrinkage block. However, some form of improved stability can be seen from the axes ranges  which do not explode as did those of  plain vanilla ResNet. The promise of higher accuracy however does not occur here from the use of batch norm, which does not reach the accuracy of plain vanilla ResNet.

We thus turn to a hybridization and show that batch normalized ResNet smoothness can be improved through the use of methods for improving stability of  numerical methods for ODEs. Fig. \ref{fig:moon_data_res_batch_norm-0.1} shows batch norm with a reduced step size of $h=0.1$. We illustrate the results on a deeper, 500-layer network. The accuracy reaches 1.0 by the 250th layer;  smoothness increases up through the 500th layer.

\begin{figure}[h]
	
	\includegraphics[scale = 0.18]{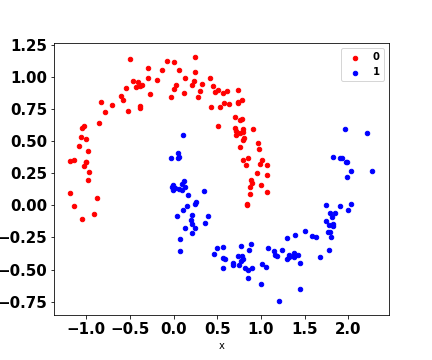}
	\includegraphics[scale = 0.18]{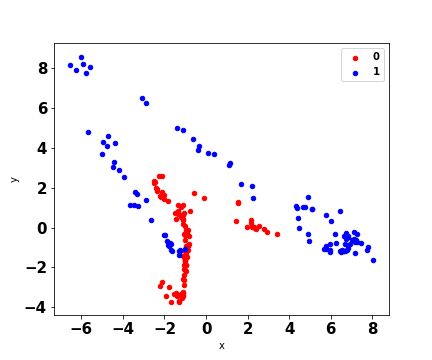}
	\includegraphics[scale = 0.18]{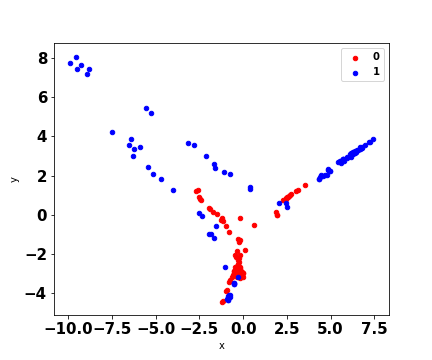}\\
	\includegraphics[scale = 0.18]{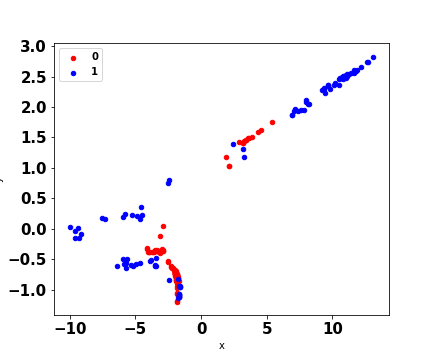} 
	\includegraphics[scale = 0.18]{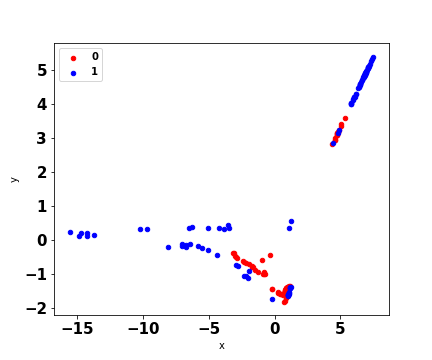}
	\includegraphics[scale = 0.18]{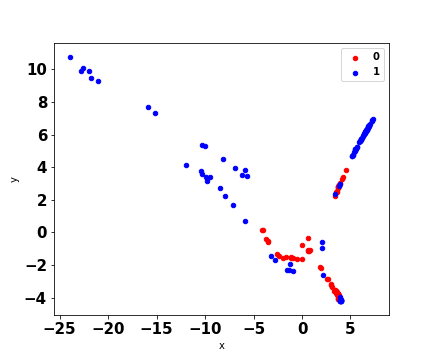}\\
	\includegraphics[scale = 0.18]{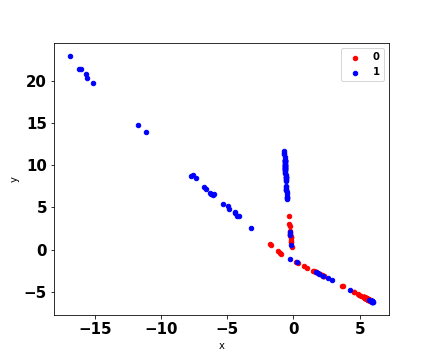}
	\includegraphics[scale = 0.18]{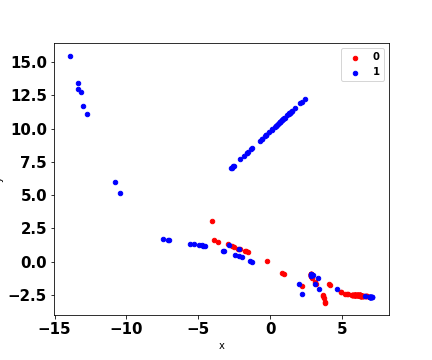} 
	\includegraphics[scale = 0.18]{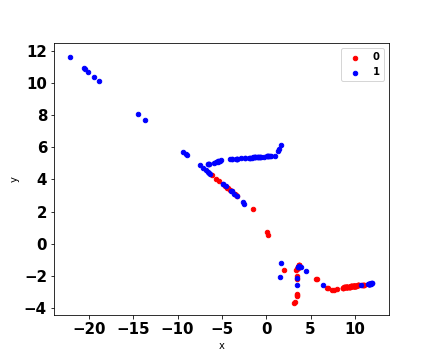}\\
	\includegraphics[scale = 0.18]{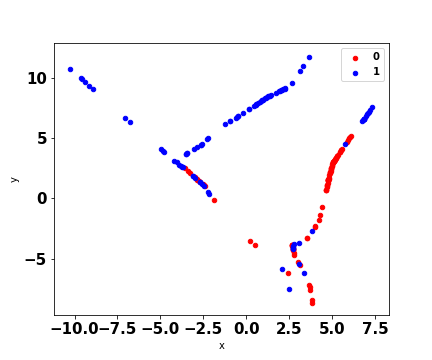}
	\includegraphics[scale = 0.18]{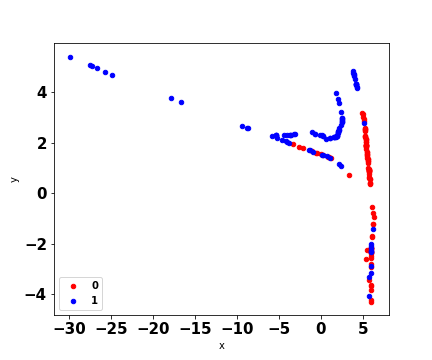}
	\includegraphics[scale = 0.18]{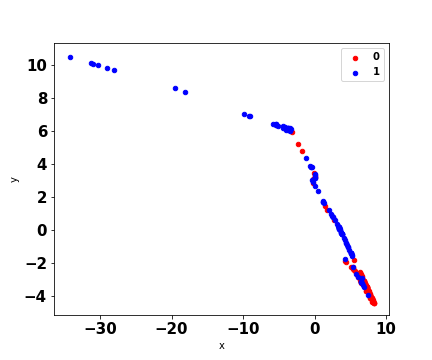}
	\caption{ResNet with batch normalization on the synthetic 2-dim. binary data of Fig. \ref{fig:moon0}. Layers shown are 0,5,10,20,30\ldots 100. Axes ranges shift  to better visualize    features with increasing depth; note that the axes do not diverge.}
	\label{fig:moon_data_res_batch_norm}
	
\end{figure}

\begin{figure}[h]
	
	\includegraphics[scale = 0.18]{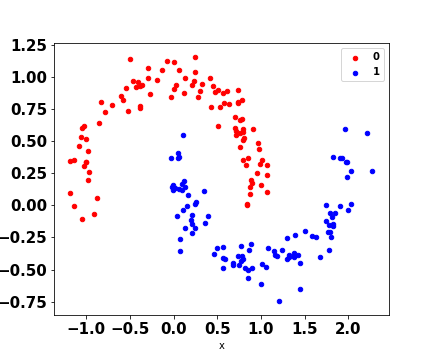}
	\includegraphics[scale = 0.18]{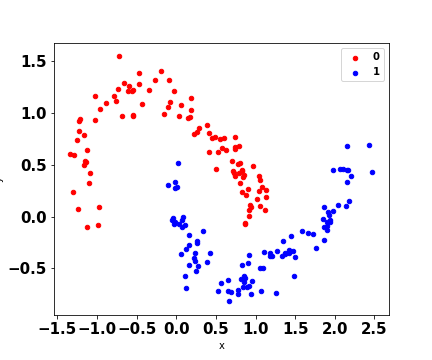}
	\includegraphics[scale = 0.18]{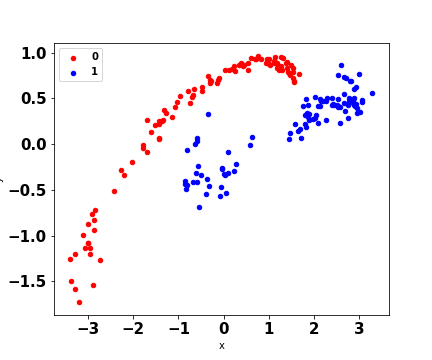}\\
	\includegraphics[scale = 0.18]{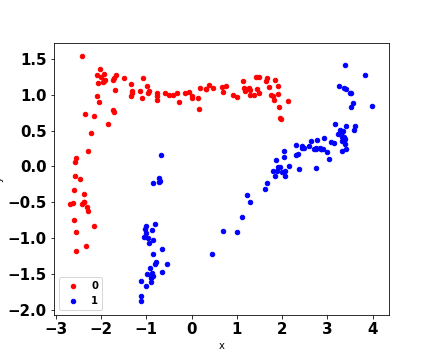} 
	\includegraphics[scale = 0.18]{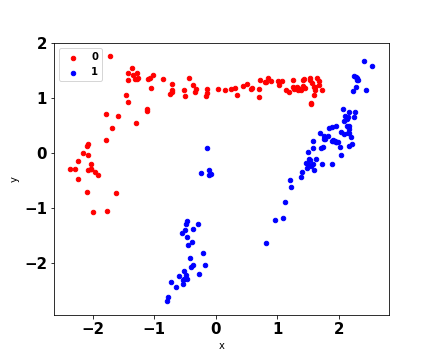}
	\includegraphics[scale = 0.18]{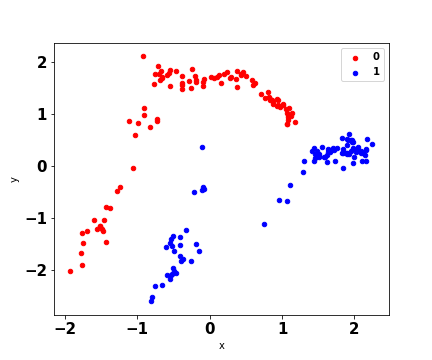}\\
	\includegraphics[scale = 0.18]{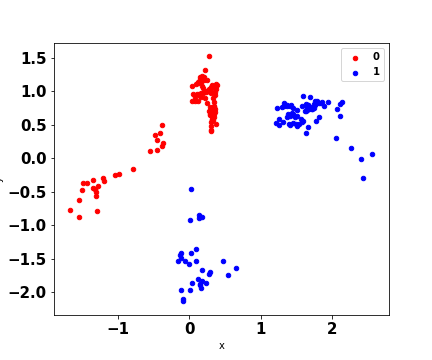}
	\includegraphics[scale = 0.18]{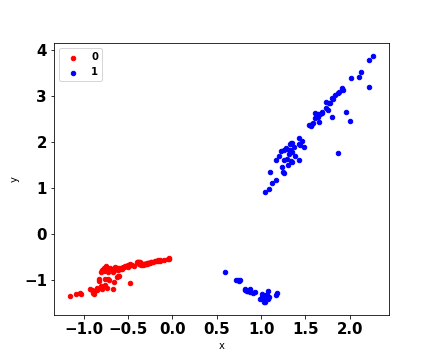} 
	\includegraphics[scale = 0.18]{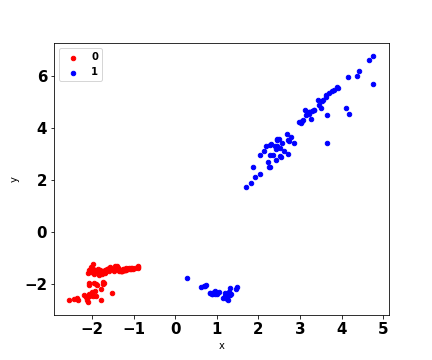}\\
	\includegraphics[scale = 0.18]{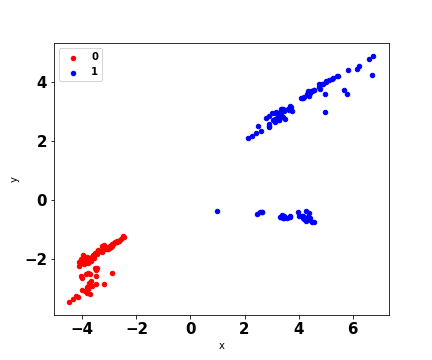}
	\includegraphics[scale = 0.18]{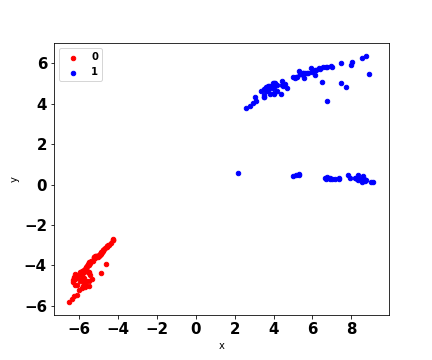}
	\includegraphics[scale = 0.18]{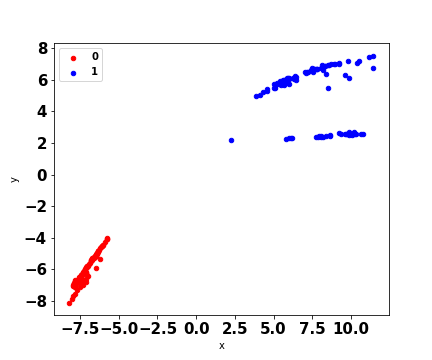}
	\caption{ResNet with batch norm and h = 0.1 on the synthetic 2-dim.  data of Fig. \ref{fig:moon0} with a deeper (500 layer) network. Layers shown are 0,5,50,100,150 200,\ldots 500. Axes ranges shift  to visualize features with increasing depth.}
	\label{fig:moon_data_res_batch_norm-0.1}
	
\end{figure}

The results on the synthetic data on a 100-layer network are summarized in Fig. \ref{fig:moonaccuracy}. Reducing $h=0.1$ offers a net advantage in accuracy and variance of accuracy, with ResNet and batch norm ResNet both at or near an optimal 1.0.
\begin{figure}[h]
	\centering
	\includegraphics[scale = 0.5]{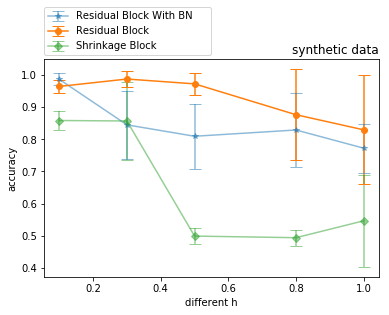}
	\caption{Different $h$ on the synthetic data using a 100-layer network. Mean and std. dev. of test accuracy over 10 trials.}
	\label{fig:moonaccuracy}
\end{figure}

%%%%%%%%%%%%%%%%%%%%
%%%%%%%%%%%%%%%%%%%%%%%
\section{Experiments}

We now turn to larger-scale experiments to validate our hypotheses. First we make use of several  datasets from the UCI  repository \cite{Dua:2017}. Then, we solve a  problem  on the Bitcoin cryptocurrency transaction network,  an example of a time-evolving dataset. 
Experiments were implemented using  Keras  \cite{chollet2015keras} as well as a modified python3 library based on Keras.  
All experiments were run on a Linux, x$86\_64$ server with 64 Intel(R) \ Xeon(R)  \  CPU  E5  - $2683 \ v4 \ @ 2.10GHz$  and 5 GeForce \ GTX \ 1080 GPU. Batch sizes are not optimized. Experiment details and in the case of the BTC example the data itself are included in the supplementary materials.

\subsection{Experiments with the UCI repository}

We train and test a deep  network with transition blocks,  (\ref{res_transition_block}) (\ref{stable_transition_block}), using an increasing  explicit step size $h$ from $h=0.1$ to $h=1$. 
For each value of  $h$, we perform  10 trials and plot the mean and the standard deviation of the test accuracy. 

\textbf{Wine dataset}
This  dataset  provides the chemical analyses of 13 constituents found in Italian wines  from three different cultivars. The goal is to identify the type of wine.
The following hyperparameter settings are used: 50 epochs, 20 hidden blocks  with width of size 13 and batch size 10. 
\begin{figure}[h]
	\centering
	\includegraphics[scale = 0.5]{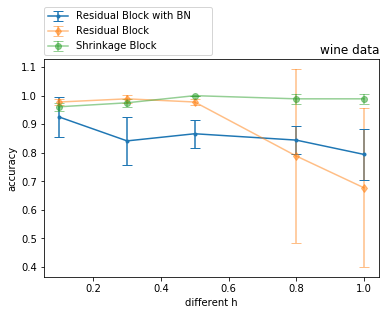}
	\caption{Different $h$ on the Wine dataset. Mean and std. dev. of test accuracy over 10 trials.}
	\label{fig:wine_data}
\end{figure}

\textbf{Abalone dataset}
The goal is to predict the size of the abalone from physical measurements rather than using the manual method of cutting the shell and counting the number of rings through a microscope. The problem is modeled as a 3-category classification problem: small (1-8 rings), medium (9-10 rings) and large (11-29 rings).
The   hyperparameter settings are: 50 epochs, 50 hidden blocks  with width of size 8 and batch size 50. 

\begin{figure}[h]
	\centering
	\includegraphics[scale = 0.5]{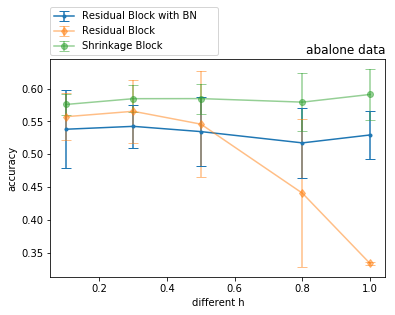}
	\caption{Different $h$ on Abalone dataset. Mean and std. dev. of test accuracy over 10 trials.}
	\label{fig:abalone_data}
\end{figure}

\textbf{Absenteeism dataset}
The goal is to predict absenteeism at work; the problem is modeled as a 4-category classification using a discretion of absentee time. The categories are   A : 0,  B: 1-16,  C: 17-56  and  D: greater than 56. 
The   hyperparameter settings are 20 epochs, 100 hidden blocks  with width of size 19 and batch size 10.

\begin{figure}[h]
	\centering
	\includegraphics[scale = 0.5]{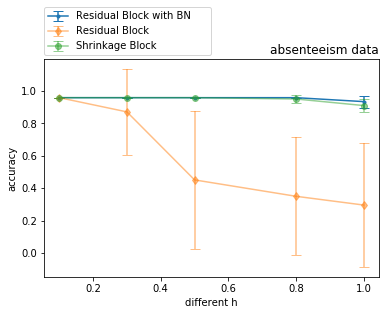}
	\caption{Different $h$ on the Absenteeism dataset. Mean and std. dev. of test accuracy over 10 trials.}
	\label{fig:abalone_data}
\end{figure}

\textbf{Autism screening adult dataset}
The goal of this binary classification problem is to predict whether an adult has been diagnosed with Autistic Spectrum Disorder (ASD). 
The   hyperparameter settings are 30 epochs, 100 hidden blocks  with width of size 20 and batch size 10. 

\begin{figure}[h]
	\centering
	\includegraphics[scale = 0.5]{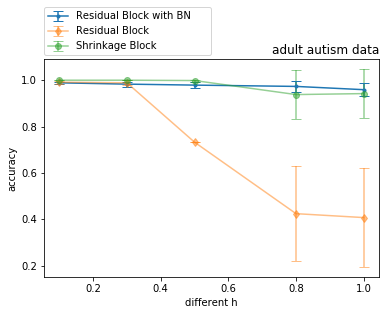}
	\caption{Different $h$ on the Autism Screening  dataset. Mean and std. dev. of test accuracy over 10 trials.}
	\label{fig:ASD_data}
\end{figure}

\textbf{Adult dataset}
The data was obtained from a US Census database of working adults and the goal is to predict whether an individual earns more than \$50K/year. 
The   hyperparameter settings are 50 epochs, 100 hidden blocks  with width of size 14 and batch size 100.

\begin{figure}[h]
	\centering
	\includegraphics[scale = 0.5]{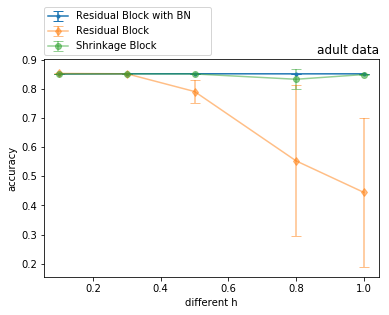}
	\caption{Different $h$ on the  Adult dataset. Mean and std. dev. of test accuracy over 10 trials.}
	\label{fig:adult_data}
\end{figure}

\subsubsection{Results}

Figures \ref{fig:wine_data}--\ref{fig:adult_data} illustrate the accuracy of the methods for increasing  step size $h$, namely  plain ResNet, called residual block, ResNet with batch normalization, and ResNet with the shrinkage block.

In all datsets,  when $h=0.8$ or $h=1$, the latter being  plain vanilla ResNet, the residual block exhibits reduced test accuracy and  a large variance in  accuracy,  wide enough to render the original ResNet ineffective on  these problems. 

Reducing the value of $h$ but making no other changes to the ResNet, that is \emph{without the use of block normalization}, achieves optimal or near-optimal accuracy  as compared to batch norm or the shrinkage block. Notice furthermore the dramatically reduced variance of the test accuracy when $h=0.1$. In the case of the abalone dataset, increasing the network depth improves accuracy of plain ResNet further  when using a small $h$.

Turning to ResNet with block normalization,  the accuracy when $h=1$ is better and much less volatile than plain vanilla ResNet, and ResNet with batch norm is less sensitive to a reduced $h$. However, 
the accuracy of ResNet with batch norm is not better than that of ResNet with $h=0.1$, even when maintaining a fixed 100-layer network for all examples. In addition, the variance of the accuracy is higher to that of plain ResNet with $h=0.1$. It should be noted that we did not optimize the batch size in these examples; ResNet with batch norm may be able to achieve better results with a different batch size. 

However, this suggests strongly that the original ResNet with an explicit step size parameter, with or without an increase in the depth of the network, is a valuable alternative to increasing the stability and the performance of ResNet.

\subsection{Bitcoin cryptocurrency transaction network}

We are interested in classification on datasets which often exhibit time-evolution in their features and responses. 
One such application is the classification of Bitcoin cryptocurrency transactions. In this example, the goal is to predict the entity type performing a transaction using   features   from the BTC transaction network. We postulate that convergence  stability may help in  transferability of the trained model over time-evolving data, of  interest to  predicting  Bitcoin transaction behavior.

The transaction data using the Bitcoin core is parsed  with Blocksci \cite{kalodner2017blocksci}. 
We generate four features over all but the  coinbase transactions from 1/2012--5/2018, which include the number of input and output addresses,  BTC volume, and transaction time stamp. Ground truth labels for the entity type as a function of address are obtained from walletexplorer.com. The four entity categories used here are: exchange (e), services (s), gambling (g), and mining pool (m). Labels are expanded using the common spending heuristic \cite{meiklejohn2013fistful}.
These procedures give rise to 16.5 million labelled transactions of the 233 million transactions.
This dataset and its illustrations on Bitcoin transaction patterns over time can be downloaded from  \href{https://www.dropbox.com/sh/majyu9d69r1y1ya/AAD-fJ-KmbK9-eHQRn_7d8PEa?dl=0}{the supplement materials}.

%%%%%%%%%%%%%%%%%%%%%%%%%%%%%%%
\iffalse

Figure \ref{fig:feature_vis_BTC} shows the time evolution of the  feature generated for this example over the time period under study. Note that the features evolve considerably over time.

\begin{figure} []
	\centering
	\includegraphics[scale = 0.18]{}
	\includegraphics[scale=0.18]{}
	\includegraphics[scale=0.18]{}
	\caption{Temporal evolution of the feature patterns of Bitcoin transactions from 2012 to 2018, including BTC volume (left), average number of input addresses (center), average number of outputs (right)}
	\label{fig:feature_vis_BTC}
\end{figure}

\fi
%%%%%%%%%%%%%%%%%%%%%%%%%%%%%%%%

Fig. \ref{fig:BTC_h} shows the BTC transaction classification. The network structure was fixed at 128 hidden blocks, width 32, 100 epochs and a 100 K batch size.
Setting the step size to $h=0.1$ offers improved accuracy although the variance of ResNet is quite high. Overall on this  large, noisy dataset, ResNet with the shrinkage block and ResNet with batch norm   offer the best accuracy and lowest variance.

However, Fig. \ref{fig:BTC_weeks} shows  the trained model prediction over time increasingly far from the training data. While the results exhibit noise,  the plain ResNet with $h=0.1$ and the shrinkage block with $h=0.1$ are most readily transferable over time. Plain vanilla ResNet performance is not good, and  Resnet with batch norm, while better, is far from optimal, for any value of $h$.

\begin{figure}[h!]
	\centering
	\includegraphics[scale = 0.5]{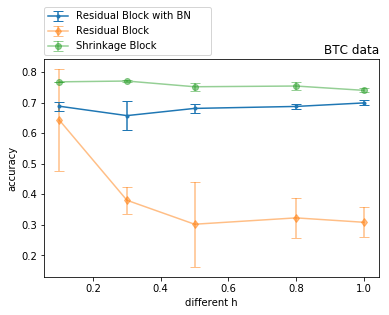}
	\caption{Different $h$ on  Bitcoin transaction entity classification. Mean and std. dev. of test accuracy over 5 trials.}
	\label{fig:BTC_h}
\end{figure}

\begin{figure}[h!]
	\centering
	\includegraphics[scale = 0.5]{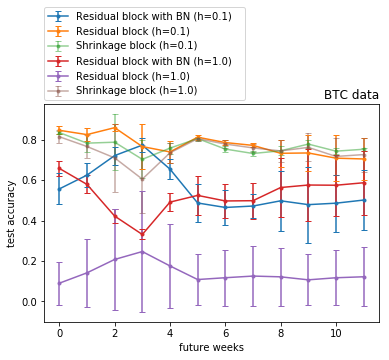}
	\caption{BTC  classification prediction accuracy over time.  Mean and std. dev. of test accuracy over 5 trials.}
	\label{fig:BTC_weeks}
\end{figure}

\section{Discussion and Next Steps}

Batch norm and the interpretation of ResNets as a step of the forward Euler equation for ODEs offer interesting avenues for improved accuracy, smoothness and stability of convergence.
We have shown visually how the  update blocks affect the convergence process on a 2-dim. binary classification problem. In particular,  use of an explicit step size, set to $h=0.1$, slows the convergence of Resnet but increases its smoothness and often its accuracy to an optimal level, even without increasing network depth. 

Batch norm has recently been  shown to increase the smoothness of the convergence process in a Lipshitz sense. We showed empirically that while indeed less volatile, batch norm on its own does not always lead to improved accuracy, but combined with a reduced step size (not necessarily as small as $h=0.1$)  is in some cases optimal. 

Batch norm requires more training time due to the many hyper-parameters; on the contrary, use of a small step size (without increasing depth) has no impact on training time. Thus, these two approaches can be seen as alternatives useful in achieving an optimal performance whilst improving stability of the convergence process. 

Several interesting avenues for future work can be suggested. While we used a fixed step size, nonlinear optimization suggests that an adaptive $h$ may be advantageous. A simple multi-layer approach, where depth was doubled and step size halved in successive layers, was defined by \cite{haber18b} but others,  less restrictive and potentially better-performing, can be studied. The more complex architectures suggested by the IVP interpretation of ResNet also provide considerable material for future research and performance improvements.

\clearpage

\bibliographystyle{aaai}
\bibliography{reference.bib}

\end{document}